\documentclass[11pt]{article}

\usepackage[preprint]{acl}
\usepackage{times}
\usepackage{latexsym}
\usepackage[T1]{fontenc}
\usepackage[utf8]{inputenc}
\usepackage{microtype}
\usepackage{inconsolata}
\usepackage{graphicx}
\usepackage{booktabs}
\usepackage{amsmath,amssymb,amsthm}
\usepackage{algorithm}
\usepackage{algpseudocode}
\usepackage{xspace}
\usepackage{tikz}
\usepackage{fontawesome5}
\usepackage{placeins}
\usetikzlibrary{arrows.meta,positioning}

\newcommand{\core}{\textnormal{\textsc{Core}}\xspace}

\newcommand{\nogoods}{\mathcal{N}}
\newcommand{\verify}{\mathsf{Verify}}
\newcommand{\finalcheck}{\mathsf{FinalCheck}}
\newcommand{\propose}{\mathsf{Propose}}
\newcommand{\contains}{\mathsf{Contains}}
\newtheorem{definition}{Definition}
\newtheorem{proposition}{Proposition}
\newtheorem{theorem}{Theorem}

\title{\core: Conflict-Oriented Reasoning Elimination for\\
Verifiable Language-Model Search}

\author{
Siyu Song$^{1}$,
Rui Xu$^{2}$,
Jia Lin$^{3}$,
Kai Liu$^{1}$,
Weifang Wang$^{1}$\\
$^{1}$School of Computer Science and Technology, School of Computer Science and Engineering\\
$^{2}$School of Information Science and Engineering, Sun Yat-sen University\\
$^{3}$College of Computer Science, College of Computer Science
}

\begin{document}
\maketitle

\begin{abstract}
Test-time reasoning systems often respond to failure by restarting or revising
the latest step, even when an earlier decision caused the error.  We introduce
\core, a search controller that requests a certified \emph{conflict core} from
a verifier, backjumps to the latest decision in that core, and caches the
conflict to avoid repeating it.  Under sound verification, finite branching
and depth, and exhaustive proposals, the uncapped search is complete and
never prunes a valid solution.  On 2,000 planted graph-coloring instances with matched
proposals and an exact verifier, \core reduces median verifier calls by
39.8\% at 30 variables and 35.0\% at 36 variables relative to chronological
repair; caching further improves on backjumping alone.  Across five reasoning
tasks, \core achieves 75.9\% mean success with Qwen2.5-7B-Instruct and 84.2\%
with Qwen3-8B, compared with 72.5\% and 81.8\% for Tree of Thoughts.  It
also uses fewer verifier calls and generated tokens on both backbones.  These
results show the value of using certified failure explanations to direct
language-model search.
\end{abstract}

\section{Introduction}

Chain-of-thought exposes intermediate computation \citep{wei2022chain}.
Test-time methods improve reasoning by sampling traces \citep{wang2023self},
decomposing problems \citep{zhou2023least}, or searching over thoughts
\citep{yao2023tree,besta2024graph}.  Yet after failure, systems usually restart,
edit the latest step, or reflect and retry
\citep{madaan2023selfrefine,shinn2023reflexion}.  If the error was introduced
earlier, such local repair revisits irrelevant decisions.

\begin{figure*}[t]
  \centering
  \resizebox{0.8\linewidth}{!}{\begin{tikzpicture}[
  x=.985cm,y=1cm,>=Latex,
  flow/.style={-Latex,line width=.85pt,draw=black!65},
  small/.style={font=\scriptsize},
  head/.style={font=\small\bfseries},
  token/.style={draw=black!35,rounded corners=2pt,fill=white,
    minimum width=1.36cm,minimum height=.72cm,font=\small},
  bad/.style={token,draw=red!75!black,fill=red!10,line width=1.1pt},
  ghost/.style={token,draw=black!25,fill=black!4,text=black!50},
  tag/.style={rounded corners=2pt,inner xsep=5pt,inner ysep=2pt,font=\scriptsize}
]
  \fill[blue!5,rounded corners=5pt] (0,4.35) rectangle (4.45,6.05);
  \fill[violet!6,rounded corners=5pt] (5.05,4.35) rectangle (9.75,6.05);
  \fill[green!6,rounded corners=5pt] (10.35,4.35) rectangle (16.15,6.05);
  \draw[blue!35,rounded corners=5pt] (0,4.35) rectangle (4.45,6.05);
  \draw[violet!35,rounded corners=5pt] (5.05,4.35) rectangle (9.75,6.05);
  \draw[green!35,rounded corners=5pt] (10.35,4.35) rectangle (16.15,6.05);

  \node[blue!65!black,font=\Large] at (.63,5.23) {\faIcon{file-alt}};
  \node[head,anchor=west] at (1.13,5.47) {Problem + state};
  \node[small,anchor=west,text=black!65] at (1.13,4.99) {$x$ and retained prefix $s$};

  \node[violet!70!black,font=\Large] at (5.73,5.23) {\faIcon{robot}};
  \node[head,anchor=west] at (6.22,5.47) {LLM proposer};
  \foreach \px/\value in {6.45/1,7.50/2,8.55/3}
    \node[draw=violet!40,fill=white,rounded corners=1pt,
      minimum width=.77cm,minimum height=.28cm,font=\tiny,
      text=violet!70!black] at (\px,4.83) {$d^{(\value)}$};

  \node[green!55!black,font=\Large] at (11.05,5.23) {\faIcon{shield-alt}};
  \node[head,anchor=west] at (11.56,5.47) {External verifier};
  \node[small,anchor=west,text=black!65] at (11.56,4.99) {solver $\cdot$ tests $\cdot$ tools};
  \node[green!55!black,font=\normalsize] at (15.45,5.44)
    {\faIcon{check-circle}};
  \node[red!65!black,font=\normalsize] at (15.45,4.94)
    {\faIcon{times-circle}};

  \draw[flow] (4.46,5.2) -- (5.04,5.2);
  \draw[flow] (9.76,5.2) -- (10.34,5.2);

  \node[head,anchor=west] at (0,3.68) {Decision trajectory};
  \node[tag,fill=green!10,text=green!45!black,anchor=east] at (13.55,3.68)
    {\faIcon{check-circle}\ pass $\rightarrow$ append};
  \fill[black!3,rounded corners=5pt] (0,2.28) rectangle (16.15,3.42);
  \draw[black!18,rounded corners=5pt] (0,2.28) rectangle (16.15,3.42);
  \node[token] (d1) at (1.38,2.84) {$d_1$};
  \node[bad] (d2) at (4.08,2.84) {$d_2$};
  \node[ghost] (d3) at (6.78,2.84) {$d_3$};
  \node[ghost] (d4) at (9.48,2.84) {$d_4$};
  \node[bad] (d5) at (12.18,2.84) {$d_5$};
  \node[ghost] (d6) at (14.88,2.84) {$d_6$};
  \foreach \a/\b in {d1/d2,d2/d3,d3/d4,d4/d5,d5/d6}
    \draw[flow] (\a.east) -- (\b.west);
  \draw[red!60!black,line width=.8pt]
    (d2.north) .. controls (5.4,3.38) and (10.7,3.38) .. (d5.north);
  \draw[flow] (14.88,4.35) -- (d6.north);
  \node[tag,fill=red!10,text=red!65!black,anchor=west] at (0,1.91)
    {\faIcon{times-circle}\ fail};
  \node[tag,fill=red!10,text=red!65!black,anchor=west] (coretag) at (2.38,1.91)
    {$C=\{d_2,d_5\}$};
  \node[tag,fill=black!5,text=black!55,anchor=west] at (5.5,1.91)
    {$d_3,d_4,d_6$: unrelated};
  \draw[red!70!black,line width=1pt] (4.08,2.41) -- (4.08,2.22);
  \draw[red!70!black,line width=1pt] (12.18,2.41) -- (12.18,2.22);

  \node[draw=orange!65!black,fill=orange!7,rounded corners=4pt,
    minimum width=3.55cm,minimum height=.92cm] (memory) at (2.35,.65) {};
  \node[orange!70!black,font=\large,anchor=west] at (.85,.65) {\faIcon{database}};
  \node[font=\small,anchor=west] at (1.35,.65) {cache $C$ / prune};
  \node[draw=red!65!black,fill=red!6,rounded corners=4pt,
    minimum width=4.15cm,minimum height=.92cm] (jump) at (10.5,.65) {};
  \node[red!70!black,font=\large,anchor=west] at (8.76,.65) {\faIcon{undo}};
  \node[font=\small,anchor=west] at (9.33,.65) {backjump to $d_5$};
  \node[token,draw=blue!55!black,fill=blue!7,minimum width=1.25cm]
    (retry) at (14.38,.65) {$d_5'$};
  \draw[flow,red!70!black] (jump.east) -- (retry.west);
  \draw[-Latex,densely dashed,orange!70!black,line width=.9pt]
    (coretag.south) -- (coretag.south |- 0,1.38) -| (memory.north);
  \draw[-Latex,red!70!black,line width=.9pt]
    (12.18,1.61) -- (12.18,1.38) -| (jump.north);
\end{tikzpicture}}
  \caption{Illustration of \core. A proposer extends a decision trajectory and
  an external verifier checks it. The highlighted decisions $d_2$ and $d_5$
  form a conflict core, while gray decisions are irrelevant to this failure.
  The controller learns the core, backjumps to $d_5$, and rejects future
  candidates that contain both conflicting decisions.}
  \label{fig:framework}
\end{figure*}

Classical solvers instead use \emph{conflict-directed search}: identify the
responsible decisions, jump over unrelated ones, and cache the conflict
\citep{dechter1990enhancement,prosser1993hybrid,silva1999grasp}.  Transferring
this principle to language models is nontrivial: natural-language ``thoughts''
are not solver literals, critiques need not be logically sufficient, and
approximate verifiers may reject valid reasoning.  A precise interface among
the proposer, verifier, and search controller is therefore required.

We propose \core (\textbf{C}onflict-\textbf{O}riented
\textbf{R}easoning \textbf{E}limination), a test-time algorithm for tasks whose
partial decisions can be checked.  The model proposes structured decisions
with stable identifiers.  On rejection, the verifier returns a \emph{conflict
core}: a subset of active decisions that cannot occur together in any valid
completion.  \core learns this core, backjumps to its most recent member, and
continues from there.  A core may come from a symbolic checker, an unsatisfiable
core, a failed program test with dynamic dependencies, or a learned verifier.
Only sound cores receive the paper's formal guarantee.
Figure~\ref{fig:framework} summarizes the separation between generation,
checking, and conflict-directed control.

Our contributions are:
\begin{itemize}
    \setlength{\topsep}{2pt}
    \setlength{\partopsep}{0pt}
    \setlength{\itemsep}{0pt}
    \setlength{\parsep}{0pt}
    \item We introduce a conflict-producing reasoning interface and \core,
    combining non-chronological backjumping with semantic nogood memoization.
    \item We prove conditional safety, completeness, and termination under
    explicit assumptions on verifier and conflict-core soundness.
    \item We provide a deterministic controlled testbed that isolates search
    under matched proposals and measures verifier calls, backtracks, and core
    reuse.
    \item We evaluate an LLM instantiation across five reasoning tasks and
    two backbones, comparing task success and inference cost with four baselines.
\end{itemize}

\section{Related Work}

\paragraph{Reasoning by sampling and search.}
Self-consistency aggregates independent traces \citep{wang2023self}, while
Tree of Thoughts, Graph of Thoughts, RAP, and LATS organize branching search
\citep{yao2023tree,besta2024graph,hao2023rap,zhou2024lats}.  MLR separates plan
descriptors from step execution \citep{xiong2026enhancing}.
These methods structure generation, choose states
to expand, or aggregate plans and answers; \core instead determines
\emph{where to return}
after a verified failure and which combinations to forbid thereafter.  The
methods are complementary: a best-first or tree-of-thought frontier can use
\core's learned constraints.

\paragraph{Verification and refinement.}
Process supervision scores intermediate steps
\citep{uesato2022solving,lightman2024verify}, and ProcessBench evaluates error
localization \citep{zheng2025processbench}.  Self-Refine, Reflexion, and CRITIC
use feedback for revision
\citep{madaan2023selfrefine,shinn2023reflexion,gou2024critic}.  Reasoning traces may be
unfaithful \citep{turpin2023unfaithful,lanham2023measuring}, so \core does not
assume that prose explanations reveal the model's internal computation.  It
requires only that accepted structured decisions and returned cores satisfy an
external contract.

\paragraph{Solver--LLM collaboration.}
LLM-Modulo systems separate candidate generation from reliable checking
\citep{kambhampati2024llmmodulo}.  SWAP represents dependencies with
entailment graphs and verifies intermediate steps
\citep{xiong2025deliberate}, while PAL executes model-generated programs
\citep{gao2023pal}.  Recent work also combines an LLM with
a solver and chronological backtracking for multi-constraint planning
\citep{stechly2025backtracking}.  \core contributes a different search rule:
conflicts name responsible decisions, enabling non-chronological backjumping
and persistent pruning.  Its lineage is conflict-directed backjumping and
constraint learning \citep{dechter1990enhancement,silva1999grasp}, reformulated
for a proposer whose actions are generated in language.

\section{Problem Formulation}

Let a problem instance be $x$.  A reasoning state
$s_t=(d_1,\ldots,d_t)$ is an ordered sequence of decisions.  A decision is a
record
\[
d_i=(\mathit{id}_i,\mathit{type}_i,\mathit{value}_i,\mathit{deps}_i),
\]
where the identifier is stable within a run and dependencies refer to prior
decisions.  Examples include assigning a value to a variable, selecting a
planning action, asserting a lemma, or choosing an API argument.  The proposer
$\propose_\theta(x,s_t)$ returns an ordered finite list of candidates.  It may
be an LLM, a policy model, or a deterministic heuristic.

Let $\kappa(d)$ be a canonical decision--value key that preserves the
decision's type, dependency context, and resource or occurrence identity
when repeated choices must remain distinct.  The active keys of a state are
\[
K(s_t)=\{\kappa(d_i):1\leq i\leq t\}.
\]
This representation lets the controller compare decisions across trajectories
while keeping choices with different dependencies distinct.  Figures abbreviate
$\kappa(d_i)$ as $d_i$.

The search-time verifier has three outputs:
\[
\verify(x,s_t \oplus d)=
\begin{cases}
\mathsf{pass},\\
\mathsf{fail}(C),
\mathsf{unknown},
\end{cases}
\]
where $C$ is a set of canonical decision--value keys drawn from the proposed
state.
\(\mathsf{fail}(C)\) is reserved for a certified sound core; heuristic
feedback cannot enter the hard cache.
\textsf{Unknown} means that the checker cannot establish either a valid step
or a sound conflict; search may retain the decision provisionally, but it
cannot learn a hard core from that outcome.  A separate
$\finalcheck(x,s)\in\{0,1\}$ scores a complete state against the task's final
criterion.

\begin{definition}[Sound conflict core]
Let $\mathcal{S}^{\star}(x)$ be the valid complete states for $x$.  A set $C$
is a sound conflict core for $x$ exactly when
\[
\forall s^{\star}\in\mathcal{S}^{\star}(x),\qquad
C\nsubseteq K(s^{\star}).
\]
\end{definition}

The definition is stronger than ``these steps look suspicious.''  It makes a
core a semantic nogood.  A direct constraint violation may yield a two-decision
core; a failed unit test may yield the decisions in its dynamic slice; a
symbolic solver may return an unsatisfiable core.  A learned verifier can
approximate this interface, but its cores are not covered by exact guarantees.

The objective is to find a complete state accepted by $\finalcheck$ within a
budget of proposer and verifier calls.  We count verifier calls because they
can involve tool execution, a second model, or a solver and frequently
dominate orchestration cost.

\section{\core}

\begin{figure*}[t]
  \centering
  \resizebox{0.8\linewidth}{!}{\begin{tikzpicture}[
  x=1cm,y=1cm,>=Latex,
  flow/.style={-Latex,line width=.85pt,draw=black!60},
  repair/.style={-Latex,line width=1pt,draw=blue!65!black},
  jump/.style={-Latex,line width=1.2pt,draw=violet!75!black},
  decision/.style={draw=black!45,fill=white,rounded corners=2pt,
    minimum width=.95cm,minimum height=.63cm,font=\small},
  implicated/.style={decision,draw=red!75!black,fill=red!9,line width=1pt},
  irrelevant/.style={decision,draw=black!22,fill=black!4,text=black!48},
  label/.style={fill=white,inner sep=2pt,font=\scriptsize}
]
  \fill[blue!4,rounded corners=5pt] (0,0) rectangle (7.85,5.15);
  \fill[violet!2,rounded corners=5pt] (8.35,0) rectangle (16.2,5.15);
  \draw[blue!30,rounded corners=5pt] (0,0) rectangle (7.85,5.15);
  \draw[violet!30,rounded corners=5pt] (8.35,0) rectangle (16.2,5.15);
  \node[font=\small\bfseries,anchor=west] at (.35,4.78)
    {(a) Chronological repair};
  \node[font=\small\bfseries,anchor=west] at (8.7,4.78)
    {(b) \core: learn and backjump};

  \node[implicated] (l1) at (.9,3.85) {$d_1$};
  \node[implicated] (l2) at (2.42,3.85) {$d_2$};
  \node[decision,fill=blue!6] (l3) at (3.94,3.85) {$d_3$};
  \node[decision,fill=blue!6] (l4) at (5.46,3.85) {$d_4$};
  \node[draw=red!70!black,fill=red!8,circle,minimum size=.68cm,
    text=red!70!black] (lf) at (7.02,3.85) {\faIcon{times}};
  \foreach \a/\b in {l1/l2,l2/l3,l3/l4,l4/lf}
    \draw[flow] (\a.east) -- (\b.west);

  \draw[repair] (lf.south) -- (7.02,3.08) -- (l4.south |- 0,3.08)
    -- (l4.south);
  \draw[repair] (l4.south) -- (5.46,2.36) -- (l3.south |- 0,2.36)
    -- (l3.south);
  \draw[repair] (l3.south) -- (3.94,1.64) -- (l2.south |- 0,1.64)
    -- (l2.south);
  \foreach \x/\y/\n in {6.25/3.08/1,4.7/2.36/2,3.18/1.64/3}
    \node[draw=blue!65!black,fill=blue!8,text=blue!65!black,circle,inner sep=1pt,
      minimum size=.3cm,font=\tiny] at (\x,\y) {\n};
  \node[draw=blue!35,fill=blue!5,rounded corners=3pt,
    minimum width=3.95cm,minimum height=.54cm,font=\scriptsize,
    text=blue!65!black] at (3.95,.57)
    {\faIcon{redo}\quad \faIcon{shield-alt}\enspace
     \faIcon{shield-alt}\enspace \faIcon{shield-alt}
     \quad repeated checks};

  \node[implicated] (r1) at (9.2,3.85) {$d_1$};
  \node[implicated] (r2) at (10.72,3.85) {$d_2$};
  \node[irrelevant] (r3) at (12.24,3.85) {$d_3$};
  \node[irrelevant] (r4) at (13.76,3.85) {$d_4$};
  \node[draw=red!70!black,fill=red!8,circle,minimum size=.68cm,
    text=red!70!black] (rf) at (15.32,3.85) {\faIcon{times}};
  \foreach \a/\b in {r1/r2,r2/r3,r3/r4,r4/rf}
    \draw[flow] (\a.east) -- (\b.west);
  \draw[black!35,densely dashed,rounded corners=3pt]
    (11.47,3.38) rectangle (14.52,4.3);
  \node[label,text=black!55] at (13,3.16) {skip $d_3,d_4$};

  \node[draw=red!65!black,fill=red!6,rounded corners=3pt,
    minimum width=3.15cm,minimum height=.64cm,font=\small]
    (conflict) at (13.52,2.34)
    {\faIcon{search}\quad $C=\{d_1,d_2\}$};
  \draw[-Latex,draw=red!70!black,line width=1pt]
    (rf.south) -- (15.32,2.83) -| (conflict.north);
  \draw[jump] (conflict.west) -- (10.72,2.34) -- (r2.south);
  \node[label,text=violet!75!black] at (10.48,2.14) {direct jump};

  \node[draw=orange!70!black,fill=orange!8,rounded corners=3pt,
    minimum width=2.65cm,minimum height=.57cm,font=\small]
    (cache) at (13.55,.77)
    {\faIcon{database}\quad cache $C$};
  \draw[-Latex,densely dashed,draw=orange!70!black,line width=.85pt]
    (conflict.south) -- (cache.north);
  \node[draw=black!25,fill=white,rounded corners=3pt,
    minimum width=2.25cm,minimum height=.57cm,font=\scriptsize,
    text=black!65] (prune) at (10.15,.77)
    {\faIcon{ban}\quad prune supersets};
  \draw[-Latex,densely dashed,draw=orange!70!black,line width=.85pt]
    (cache.west) -- (prune.east);
\end{tikzpicture}}
  \caption{The same failure under matched decision order. Chronological repair
  retreats through $d_4$ and $d_3$ before reconsidering $d_2$. \core uses
  $C=\{d_1,d_2\}$ to jump directly to $d_2$ and caches the conflict to prune
  its supersets. Gray decisions are absent from the core.}
  \label{fig:backjump}
\end{figure*}

\subsection{Algorithm}

\begin{algorithm}[t]
\caption{\core search}
\label{alg:core}
\begin{algorithmic}[1]
\Require instance $x$, proposer $\propose$, verifier $\verify$,
         final scorer $\finalcheck$
\State $s \gets [\,]$; $\nogoods \gets \emptyset$
\While{budget remains}
  \If{$s$ is complete}
    \If{$\finalcheck(x,s)=1$}
      \State \Return $s$
    \EndIf
    \State $s\gets$\Call{RejectLeaf}{$s$}; \textbf{continue}
  \EndIf
  \State $A \gets \propose(x,s)$
  \State $d \gets$ first untried $a\in A$ such that
         $\neg\contains(s\oplus a,\nogoods)$
  \If{no such $d$ exists}
    \State $C \gets$ \Call{ExplainExhaustion}{$s$}
    \If{$C$ is certified}
      \State $(s,\nogoods)\gets$
        \Call{LearnAndJump}{$s$, $C$, $\nogoods$}
    \Else
      \State $s\gets$\Call{Backtrack}{$s$}
    \EndIf
    \State \textbf{continue}
  \EndIf
  \State $r\gets\verify(x,s\oplus d)$
  \If{$r=\mathsf{pass}$ \textbf{or} $r=\mathsf{unknown}$}
    \State $s\gets s\oplus d$
  \Else
    \State $(s,\nogoods)\gets$
      \Call{LearnAndJump}{$s\oplus d$, $r.C$, $\nogoods$}
  \EndIf
\EndWhile
\State \Return \textsc{BudgetExceeded}
\end{algorithmic}
\end{algorithm}

\core maintains a stack of active decisions and a set $\nogoods$ of learned
cores.  Before verifying a candidate, it checks whether the resulting state
contains a learned core.  Such a candidate is skipped without another verifier
call.  A new failure core is minimized when possible, stored, and used to
choose a backjump target.  An \textsf{unknown} result can extend the stack,
but only the final scorer can accept a complete state.

\textsc{LearnAndJump} first removes redundant members if the verifier supports
core minimization.  It inserts the remaining core into $\nogoods$, deleting
stored supersets.  For a nonempty core $C$ in $s_t$, the backjump level and
retained state are
\[
\begin{aligned}
j(C,s_t)&=\max\{i\leq t:\kappa(d_i)\in C\},\\
s'&=s_{j(C,s_t)-1}.
\end{aligned}
\]
The state is thus truncated to the level immediately before
$j=j(C,s_t)$, and the offending value at $j$ is marked tried.  Decisions
newer than $j$ are skipped:
they cannot resolve a conflict that does not mention them.  If all candidates
at a level are exhausted, their individual explanations are resolved into an
exhaustion core, analogous to eliminating the current variable in constraint
search \citep{robinson1965resolution,dechter1990enhancement}.
Figure~\ref{fig:backjump} contrasts this operation with chronological repair:
the conflict excludes an intermediate decision, so \core skips that level and
records a reusable nogood.

\paragraph{Candidate and core bookkeeping.}
The controller keeps a tried-candidate ledger for each retained prefix.
A candidate key includes its decision type, canonical value, dependencies,
and any identity needed to distinguish repeated choices; changing a dependency
changes the key even if the surface text is identical.
Within an instance, the cache test is
\[
\contains(s,\nogoods)
=\mathbf{1}\bigl[\exists C\in\nogoods:\ C\subseteq K(s)\bigr].
\]
A cached
core therefore rejects later supersets regardless of the order of decisions
outside $C$, without treating an unverified critique as a constraint.

\paragraph{Certifying a jump.}
Before learning $C$, the checker must establish that no valid completion
contains all of $C$.  Reproducing a failure on one complete trajectory is
insufficient: another continuation might succeed.  If core minimization is
available, each removed member is accepted only after the smaller core is
certified again; otherwise the last certified core is kept.  An exhaustion
core also requires coverage of every legal alternative at that decision
level.  If the complete alternative set is $A_j$, and each $a\in A_j$ has a
certified core $C_a$ containing $\kappa(a)$, a candidate exhaustion core is
\[
C_{\mathrm{exh}}
=\bigcup_{a\in A_j}\bigl(C_a\setminus\{\kappa(a)\}\bigr).
\]
Any completion retaining $C_{\mathrm{exh}}$ must choose an $a\in A_j$ and
therefore contain $C_a$.  Running out of sampled candidates in one proposal
batch does not
establish exhaustion, so the controller falls back to chronological repair
when the checker cannot certify it.
Under finite token or verifier-call budgets, the loop may instead return
\textsc{BudgetExceeded}; the completeness result below applies only when the
budget does not interrupt the search.

\paragraph{Worked 24-Game example.}
Consider the cards $\{1,3,4,6\}$.  The proposer first chooses
$d_1:1+3=4$, leaving the new $4$, the original $4$, and $6$.
It then chooses $d_2:4_{\mathrm{orig}}+6=10$ and
$d_3:4_{\mathrm{new}}+10=14$.  Before accepting $d_3$, an illustrative
search-time verifier rejects the terminal value 14 and uses exact
rational-arithmetic replay to explain the failure.  Exhaustive completion
from $\{4_{\mathrm{new}},
4_{\mathrm{orig}},6\}$ cannot yield 24, so the checker certifies
$\mathsf{fail}(C=\{d_1\})$: retaining $d_1$ makes every completion invalid,
regardless of $d_2$ and $d_3$.  \core caches this core and jumps from $d_3$
to $d_1$, skipping $d_2$.  A new first choice can then produce
$d'_1:3/4=3/4$, $d'_2:1-3/4=1/4$, and $d'_3:6/(1/4)=24$.
This illustrates how a certified prefix conflict identifies an earlier
decision to change, while the skipped decision is irrelevant to that
failure.

\subsection{Prompt and verifier contract}

An LLM instantiation should ask for one decision at a time in a machine-readable
schema, not an unrestricted replacement trace.  A minimal response contains
\texttt{id}, \texttt{value}, and \texttt{depends\_on}; optional prose remains
non-binding.  The controller, rather than the model, owns the stack, candidate
ledger, and learned cores.  On backjump, the prompt contains the retained
prefix and only the cores relevant to the next decision.  This avoids an
ever-growing transcript.

\textsc{RejectLeaf} marks a complete state rejected by the final scorer as
tried and retreats chronologically; it does not cache a conflict without a
sound explanation.  Likewise, exhaustion uses chronological backtracking
when no certified exhaustion core is available.

Conflict production has three practical tiers:
\begin{enumerate}
    \item \textbf{Exact:} SAT/SMT/CSP solvers, parsers, type checkers, and
    deterministic task constraints can return certified cores.
    \item \textbf{Replay-certified:} a proposed core is accepted only if a
    sound checker rules out every completion containing that subset.
    \item \textbf{Heuristic:} an LLM or learned process model names likely
    causes.  These may guide search but should be soft constraints with
    expiration or confirmation, because hard pruning can destroy completeness.
\end{enumerate}

\subsection{Guarantees}

\begin{theorem}[Safety and completeness]
\label{thm:complete}
Assume (i) every learned core is sound, (ii) branching and maximum reasoning
depth are finite,
(iii) the proposer eventually enumerates every legal candidate at a revisited
state, (iv) the search-time verifier never rejects a prefix of a valid
solution, (v) the final scorer accepts exactly the valid complete states, and
(vi) the run is not stopped by a finite budget before search completes.
Then \core never prunes a valid complete solution and returns a solution
whenever one exists.
\end{theorem}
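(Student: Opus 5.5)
The proof splits into safety and completeness, organised around one invariant. Fix a valid complete solution $s^\star\in\mathcal{S}^\star(x)$. The invariant is that no step of \core removes from consideration a prefix of $s^\star$ that is still live. By a \emph{prefix of $s^\star$} I mean a state $s$ such that some valid completion extends $s$, so $K(s)\subseteq K(s^\star)$ for that completion.

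\textbf{Safety.} I would check, case by case, every mechanism that discards a candidate or a state.
\begin{enumerate}
\item The cache test $\contains(s\oplus a,\nogoods)$ fires only if some $C\in\nogoods$ satisfies $C\subseteq K(s\oplus a)$. By (i) and the definition of a sound conflict core, $C\nsubseteq K(s^\star)$ for every valid $s^\star$. Hence $s\oplus a$ is a prefix of no valid solution.
\item A $\mathsf{fail}$ from $\verify$ is never issued on a prefix of a valid solution, by (iv).
\item \textsc{LearnAndJump} truncates to $s_{j-1}$ and marks $d_j$ tried. Any state that extends $s_{j-1}\oplus d_j$ and keeps $d_{j+1},\dots,d_t$ contains $C$, because $C\subseteq\{\kappa(d_1),\dots,\kappa(d_j)\}$ by the choice of $j$ as the maximal index. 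So every discarded continuation contains $C$ and is unsound by (i).
\item An exhaustion core is sound by the covering argument given in the text. Insertion into $\nogoods$ only deletes supersets, which are implied.
\item \textsc{RejectLeaf} discards only complete states with $\finalcheck=0$, which are invalid by (v).
\item Chronological \textsc{Backtrack} is an ordinary DFS retreat and loses nothing.
\end{enumerate}
Together these show that no valid complete solution is ever pruned.

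\textbf{Completeness.} I would argue that the search tree is finite and that each loop iteration makes progress in a well-founded order. By (ii), states have bounded depth $D$ and bounded branching $b$. I would attach to the current state $s=(d_1,\dots,d_t)$ the vector of tried-ledger sizes at each level and order these vectors lexicographically, shallower levels first. Each operation either appends a decision, or marks some candidate at level $j\le t$ tried while truncating deeper levels. The second kind strictly increases the vector at position $j$. The first kind can happen at most $D$ times in a row, since depth is bounded. The ledger vector is bounded, so by (vi) the loop terminates. The only exits are returning an accepted $s$ or exhausting the root. By (v), a returned $s$ is valid.

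It remains to show that the root cannot be exhausted when $s^\star$ exists. Using (iii), the proposer eventually offers every legal candidate at each revisited state. Combined with the safety invariant, the deepest live prefix of $s^\star$ always has its next decision $d^\star$ untried. Such a $d^\star$ is never filtered by the cache (item 1 applied contrapositively), never rejected by $\verify$ (item 2), and never skipped by a jump (item 3). So root exhaustion would require discarding a live prefix, which contradicts the invariant. The search therefore reaches $s^\star$ or another accepted state.

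\textbf{Main obstacle.} The delicate step is the interaction between ledgers and truncation. When the search backjumps past level $j$, the ledgers of the discarded levels $j+1,\dots,t$ are reset. The revisited states are then not identical to before, and I must make sure the termination measure still decreases. My first attempt is to key the ledger by the retained prefix, as the paper does. The measure is then the lexicographic tuple over the current path. A jump to $j$ increases the $j$-th coordinate and frees only deeper coordinates, so it is still a strict increase lexicographically.

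A second subtlety is an \textsf{unknown} result that extends a non-prefix. This is harmless: such a branch is finite, and it eventually either fails or is rejected as a leaf without ever pruning a live prefix.
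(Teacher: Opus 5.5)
Your proposal is correct and follows the paper's own argument. Cached cores prune only supersets of sound cores, final-scorer rejections are invalid by (v), a backjump to the latest core member discards only states whose keys already contain the core, and unknown or chronological moves never abandon the branch of $s^\star$. The one difference is that you make termination explicit with a lexicographic ledger measure and a ``deepest surviving prefix of $s^\star$'' invariant, whereas the paper asserts that finite branching and eventual enumeration force every unpruned candidate to be tried, and defers termination to its separate proposition.
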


\begin{proof}
A partial branch is pruned by a learned core only when it contains that core.
By core soundness, no valid solution contains the core.  A complete state
rejected by the final scorer is also invalid under the stated assumption.
A backjump removes only decisions newer than the latest member of the conflict;
changing any removed decision while retaining the whole core cannot produce a
solution.  Thus backjumping skips only invalid subtrees.  Finite branching and
eventual enumeration imply that every unpruned candidate is eventually tried.
If a solution exists, none of its prefixes contains a sound core, so its branch
is eventually reached and accepted by the final scorer.  Unknown partial
checks and chronological retreat without a certified core cannot remove it.
\end{proof}

\begin{proposition}[Termination]
Under the assumptions of Theorem~\ref{thm:complete}, \core terminates with a
solution or exhaustion.
\end{proposition}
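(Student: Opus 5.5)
The plan is a well-founded measure argument over a finite search tree.

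\textbf{Step 1: a finite tree.} By assumption (ii), the states reachable by appending proposed candidates form a tree $T$. Every node has finitely many children, and no branch exceeds the maximum depth. So $T$ is finite, and for each retained prefix $s$ the candidate ledger can contain only finitely many keys. In addition, every learned core is a subset of keys drawn from $T$. Hence $\nogoods$ ranges over a finite family, and insertion with superset deletion can change $\nogoods$ only finitely often in a monotone sense: the set of pruned states $\{s\in T:\contains(s,\nogoods)\}$ never shrinks.

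\textbf{Step 2: progress in every iteration.} We show that each iteration of the loop in Algorithm~\ref{alg:core} either returns or strictly increases the pair
\[
\Phi=\bigl(\text{pruned set},\ \text{tried ledger restricted to unpruned prefixes}\bigr).
\]
The increase is taken in an order compatible with a lexicographic ranking of the current stack. Concretely, we assign to each state the tuple of numbers of tried candidates at levels $1,\dots,|s|$. A small ordinal-style potential then works:
\[
\Psi=\sum_{s\in T}\lvert\text{tried}(s)\rvert\cdot B^{-|s|}+\lvert\text{pruned}\rvert .
\]
Here $B$ exceeds the branching bound plus one.

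We check the iteration types one at a time.
\begin{itemize}
\item A \emph{pass} or \emph{unknown} extension only pushes the stack. It can occur at most depth-many times consecutively, because depth is bounded.
\item \textsc{RejectLeaf} marks a leaf as tried.
\item \textsc{LearnAndJump} marks the offending value at level $j$ as tried.
\item Chronological backtrack on exhaustion marks the current node as tried at its parent.
\end{itemize}
Each marking adds a new ledger entry, since candidates are drawn as \emph{untried}. Entries are never deleted for a retained prefix. So $\Psi$ strictly increases at least once every depth-many steps, and $\Psi$ is bounded above because $T$ is finite. The loop therefore executes finitely many iterations. By (vi) the budget does not intervene, so it must exit through a \textbf{return}.

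\textbf{Step 3: what the exits mean.} The only other exit is exhaustion of the root, where no untried unpruned candidate remains at $s=[\,]$. In that case Theorem~\ref{thm:complete} gives that no solution exists. Conversely, Theorem~\ref{thm:complete} ensures that if a solution exists, the return is a solution, by (v). I would make the root-exhaustion case explicit, since the algorithm as written does not show this branch.

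\textbf{Main obstacle.} The delicate point is the ledger under backjumps. After truncation to $s_{j-1}$, ledgers of discarded deeper prefixes might be reset. If that happens, a later revisit could replay the same candidates, and $\Psi$ would not be monotone. I would handle this with a nested lexicographic order on the stack profile instead of $\Psi$: compare the tuples (number tried at level 1, level 2, \dots), with deeper levels allowed to reset only when a shallower level strictly increases. Every backjump, backtrack, or leaf rejection strictly increases some shallower coordinate, bounded by the branching factor. The lexicographic order on bounded tuples of bounded length is well-founded, which yields termination even when deeper ledgers are forgotten.
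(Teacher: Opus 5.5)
Your proof is correct. It uses the same underlying facts as the paper's: a finite tree, no candidate retried at a retained prefix, and monotone pruning. The difference is that you package them as a rank function, whereas the paper gives a three-sentence count: finitely many decision sequences exist, ledgers forbid repeats, and learning only removes sequences, so only finitely many untried candidates can be visited.

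The paper's count is shorter, but it tacitly needs each (prefix, candidate) pair to be tried at most once over the whole run. That is immediate only if ledgers survive truncation. It also does not notice that Algorithm~\ref{alg:core} has no return for an exhausted root.

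Your version supplies both missing pieces. Every retreat to depth $j-1$ marks $d_j$ at $s_{j-1}$, raising a shallower coordinate of the stack profile. This is exactly what prevents a discarded prefix from being re-entered with a fresh ledger. You also make the root-exhaustion exit explicit.

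Three small fixes:
\begin{enumerate}
\item Your weighted $\Psi$ already handles the ``main obstacle.'' Marking $d_j$ gains $B^{-(j-1)}$, while discarding the ledgers of $s_j,\dots,s_t$ loses at most $b\sum_{m\ge j}B^{-m}=bB^{-(j-1)}/(B-1)$. This is less than $B^{-(j-1)}$ when the branching bound $b$ satisfies $b<B-1$, which is your choice of $B$. So $\Psi$ is just the base-$B$ encoding of your lexicographic profile, and the separate fallback is unnecessary.
\item ``Bounded above'' should be ``takes finitely many values.'' This holds because there are finitely many ledger and pruned-set configurations over $T$.
\item ``Each marking adds a new entry'' relies on the invariant that an active decision $d_j$ is not yet recorded in the ledger of $s_{j-1}$. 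This invariant holds when marking happens on retreat, but you should state it.
\end{enumerate}
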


\begin{proof}
There are finitely many decision sequences.  Candidate ledgers prevent retrying
the same decision at the same retained state, and learning can only remove
additional sequences.  Therefore the loop cannot visit infinitely many
distinct untried candidates.
\end{proof}

\paragraph{Why cores can save exponential work.}
Suppose a conflict depends on decisions at levels $i<j$ but is discovered at
depth $t$.  Chronological repair may enumerate combinations of levels
$j+1,\ldots,t$ before changing $j$.  A core that excludes those levels skips
their Cartesian product immediately.  Memoization also prunes the same
conflict when reached through a different ordering of irrelevant decisions.
This is a structural benefit, not a claim that every instance improves:
uninformative cores containing the full prefix reduce \core to chronological
backtracking plus bookkeeping.

\begin{figure}[t]
  \centering
  \includegraphics[width=0.8\linewidth]{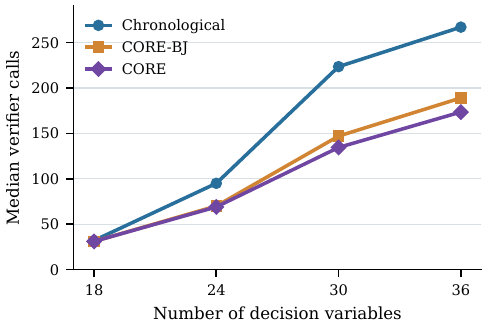}
  \caption{Median verifier calls over 500 paired instances per size.  All
  methods use the same proposal order and exact verifier.}
  \label{fig:calls}
\end{figure}

\begin{table}[t]
\centering
\small
\begin{tabular}{r rrr r}
\toprule
$n$ & Chron. & CORE-BJ & \core & $\Delta$ \\
\midrule
18 & 32.0 & 31.0 & \textbf{31.0} & $-3.1\%$ \\
24 & 95.0 & 70.0 & \textbf{69.0} & $-27.4\%$ \\
30 & 223.5 & 147.0 & \textbf{134.5} & $-39.8\%$ \\
36 & 267.0 & 189.0 & \textbf{173.5} & $-35.0\%$ \\
\bottomrule
\end{tabular}
\caption{Median verifier calls.  $\Delta$ compares \core with chronological
repair.  Every method solved every main-test instance within budget.}
\label{tab:main}
\end{table}

\section{Search Mechanism Analysis}

We use a controlled graph-coloring testbed to isolate the effect of the
search controller.  The evaluation on language-model reasoning tasks follows
in Section~\ref{sec:llm-eval}.

\begin{table*}[t]
\centering
\small
\resizebox{\textwidth}{!}{
\begin{tabular}{lrrrrrrrr}
\toprule
Method & \shortstack{GSM8K\\(500)} & \shortstack{MATH-500\\(500)} &
\shortstack{24-Game\\(200)} & \shortstack{Blocks\\(200)} &
\shortstack{HumanEval\\(164)} &
Mean & Verifier calls & Tokens (k) \\
\midrule
\multicolumn{9}{l}{\textit{Qwen2.5-7B-Instruct}} \\
Direct CoT & 72.4 & 42.8 & 61.0 & 41.0 & 58.5 & 55.1 & 1.0 & 1.3 \\
Self-Consistency-8 & 80.6 & 51.2 & 78.0 & 48.0 & 64.0 & 64.4 & 8.0 & 9.6 \\
Self-Refine-4 & 78.1 & 49.6 & 75.0 & 55.0 & 67.7 & 65.1 & 3.2 & 4.9 \\
Tree of Thoughts & 82.0 & 55.4 & 88.0 & 67.0 & 70.1 & 72.5 & 18.6 & 14.2 \\
\core & \textbf{83.8} & \textbf{58.6} & \textbf{92.0} &
\textbf{73.0} & \textbf{72.0} & \textbf{75.9} & 10.9 & 9.1 \\
\midrule
\multicolumn{9}{l}{\textit{Qwen3-8B}} \\
Direct CoT & 87.8 & 73.8 & 64.5 & 50.5 & 68.3 & 69.0 & 1.0 & 1.5 \\
Self-Consistency-8 & 92.2 & 79.8 & 79.5 & 54.5 & 72.0 & 75.6 & 8.0 & 6.5 \\
Self-Refine-4 & 89.8 & 78.2 & 76.5 & 59.5 & 72.0 & 75.2 & 3.0 & 4.8 \\
Tree of Thoughts & 93.2 & 82.8 & 88.5 & 69.5 & 75.0 & 81.8 & 19.0 & 7.0 \\
\core & \textbf{93.8} & \textbf{85.2} & \textbf{91.5} &
\textbf{74.5} & \textbf{76.2} & \textbf{84.2} & 12.0 & 6.0 \\
\bottomrule
\end{tabular}}
\caption{End-to-end task success rates (\%) for both backbones. Calls and
generated tokens are per-problem means. Mean success averages the five task
columns; sample counts are shown in the headers.}
\label{tab:llm-main}
\end{table*}

\subsection{Testbed}

We use planted 3-coloring because it exposes all relevant algorithmic objects
without an ambiguous learned judge.  For each size
$n\in\{18,24,30,36\}$, we generate 500 graphs.  Each vertex receives a planted
color; edges are sampled with probability $0.28$ only between differently
colored vertices, guaranteeing at least one solution.  Variables are ordered
by decreasing degree.  A deterministic noisy proposer ranks the planted color
first with probability $0.30$ and otherwise uses a seeded random order.  All
methods receive exactly the same candidate ordering.

The verifier accepts a color when it differs from every assigned neighbor.
Upon failure it returns the conflicting vertex assignments.  When every color
for a vertex fails, the search procedure resolves those explanations into the
set of prior assignments that block its domain.  These are exact conflict
cores.

\paragraph{Systems.}
\textsc{Chronological} is depth-first repair that returns one level after an
exhausted decision.  \textsc{CORE-BJ} uses conflict-directed backjumping but
does not cache cores.  \core adds core memoization.  The primary metric is the
number of verifier calls until the first solution; we also record expansions,
backtracks, jump distance, and learned cores.  Runs are capped at 100,000 calls.
All random choices are derived from recorded instance seeds.

\subsection{Results}

\begin{figure*}[t]
  \centering
  \includegraphics[width=0.8\linewidth]{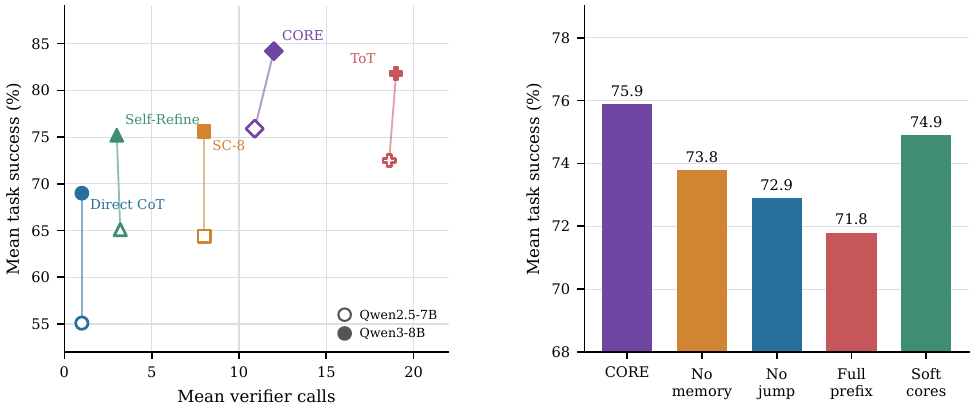}
  \caption{End-to-end results. Left: mean task success against mean verifier
  calls for both backbones. Right: Qwen2.5-7B-Instruct ablations of core
  memory, backjumping, and conflict precision.}
  \label{fig:llm}
\end{figure*}

\paragraph{Backjumping reduces verification.}
All three methods solve all 2,000 main instances.  Figure~\ref{fig:calls} and
Table~\ref{tab:main} show that the methods are similar on the smallest graphs,
where most runs encounter little backtracking.  The gap grows with deeper
search.  At 30 variables, \core reduces the median from 223.5 to 134.5 calls
(39.8\%); at 36 variables, it reduces 267.0 to 173.5 (35.0\%).  Mean calls at
$n=36$ fall from 546.0 to 235.4, indicating that conflict information is
especially useful on the heavy tail.

\paragraph{Memory adds value beyond jumping.}
\textsc{CORE-BJ} already improves substantially over chronological repair.
Memoization lowers median calls further at $n=24,30,36$ by 5.7\%, 8.5\%, and
8.2\%, respectively.  At $n=18$, little search repeats and the two variants
tie.  This pattern is consistent with the intended mechanism: learned cores
matter only after search begins to revisit a failed combination under
irrelevant surrounding decisions.

\paragraph{Sensitivity to proposal quality.}
At $n=30$, we rerun 200 paired instances with planted-first probabilities
$0.10$ and $0.60$.  With the weaker proposer, median calls are 236.0 for
chronological repair, 149.5 for backjumping, and 135.0 for \core.  With the
stronger proposer, they are 133.5, 92.0, and 90.5.  Conflict-directed search
helps in both regimes, while the incremental value of memory shrinks when a
good proposal reaches a solution before conflicts recur.

\section{Experiments}
\label{sec:llm-eval}

\subsection{Experimental setup}

\paragraph{Backbone and decoding.}
We report results with Qwen2.5-7B-Instruct and Qwen3-8B as proposers, both
served with vLLM using bfloat16 weights and the same decoding configuration.
Candidate decisions are sampled at temperature $0.7$ and
$\mathrm{top\_p}=0.95$, with at most 128 generated tokens per candidate.  The
controller requests four candidates per decision and permits at most 24
accepted decisions.  Per-problem budgets are 16,384 generated tokens and
32 verifier calls.

\paragraph{State and cache representation.}
The proposer emits JSON records with
\texttt{id}, \texttt{decision\_type}, \texttt{value}, and
\texttt{depends\_on}.  The controller canonicalizes numbers, variable names,
API arguments, and commutative expressions before hashing a decision together
with its type and dependencies.
Learned cores are stored as sorted sets of canonical decision--value hashes.
Subset lookup uses an inverted index from hashes to core identifiers.  Core
minimization performs deletion-based replay for at most eight additional
checker calls; if the cap is reached, the non-minimal sound core is retained.
Heuristic cores are never inserted into the hard cache.

\begin{table}[t]
\centering
\small
\resizebox{\columnwidth}{!}{
\begin{tabular}{lrrrr}
\toprule
Variant & Success & Calls & Core size & Jump \\
\midrule
\core & \textbf{75.9} & \textbf{10.9} & 2.7 & 3.8 \\
No core memory & 73.8 & 12.7 & 2.7 & 3.6 \\
No backjumping & 72.9 & 15.4 & 2.8 & 1.0 \\
Full-prefix conflicts & 71.8 & 16.1 & 8.9 & 1.2 \\
Soft heuristic cores & 74.9 & 14.3 & 3.4 & 2.9 \\
\bottomrule
\end{tabular}}
\caption{Qwen2.5-7B-Instruct ablations. Success is mean task accuracy (\%);
other columns are per-problem means. ``Jump'' is the number of decision levels
removed per repair.}
\label{tab:llm-ablation}
\end{table}

\paragraph{Tasks and sampling.}
For GSM8K \citep{cobbe2021gsm8k}, we sample 500 problems from the official
test split with fixed seed 1729 and retain the selected problem IDs.  We use
all 500 MATH-500 problems \citep{lightman2024verify}.  For 24-Game, we select
200 fixed puzzles from the Tree of Thoughts collection \citep{yao2023tree},
confirm solvability with an exhaustive solver, and retain each puzzle's four
cards and ID.  We select 200 Blocksworld instances from PlanBench
\citep{valmeekam2023planbench}, stratified by plan length.  We use all 164
HumanEval problems \citep{chen2021codex}.

\paragraph{Search-time checks and final scoring.}
\textbf{GSM8K:} Search checks arithmetic in structured equations and
consistency with declared dependencies.  Textual inferences that cannot be
checked mechanically return \textsf{unknown}.  Final scoring extracts the
numeric answer and compares it with the dataset answer.
\textbf{MATH-500:} Search checks explicit algebraic transformations and
substitution equalities; parsing failures return \textsf{unknown}.  Final
answers are compared using a fixed version of Math-Verify
\citep{mathverify}, with disputed parses reviewed manually.
\textbf{24-Game:} Exact rational arithmetic checks operands, operators,
results, and card-use counts during search.  Final scoring requires each of
the four cards exactly once and a value of 24.
\textbf{Blocksworld:} A deterministic state-transition checker verifies
action preconditions and successor states.  Final scoring executes the full
plan from the initial state and tests the goal.
\textbf{HumanEval:} Search checks syntax and runs a fixed set of public tests.
Test failures on unfinished programs return \textsf{unknown}.  Only the final
program is run against held-out evaluation tests, with generated code
executed in an isolated environment.

\paragraph{Baselines and budget matching.}
Direct CoT receives one 16,384-token attempt.  Self-Consistency samples eight
complete traces and votes over normalized answers.  Self-Refine performs up to
four critique--revision rounds.  Tree of Thoughts uses branching factor four,
beam width five, and the same verifier-call cap as \core.  Within each
backbone, all methods use the same proposer, prompt examples, answer parser,
and final verifier.  We report
realized token and verifier-call usage alongside task success because the
methods consume these resources differently.

\subsection{Results and analysis}

\paragraph{Main comparison.}
In Table~\ref{tab:llm-main}, \core has the highest success rate on all five
tasks with both backbones.  With Qwen2.5-7B-Instruct, mean success is 75.9\%,
versus 72.5\% for Tree of Thoughts.  With Qwen3-8B, it is 84.2\%, versus
81.8\%.  Blocksworld gives the largest gain over Tree of Thoughts for both
backbones ($+6.0$ and $+5.0$ points, respectively), where failed
preconditions identify earlier causal decisions.

\paragraph{Efficiency.}
Figure~\ref{fig:llm} shows the success--call tradeoff for both backbones.
Compared with Tree of Thoughts, \core uses 10.9 versus 18.6 calls and 9.1k
versus 14.2k generated tokens with Qwen2.5-7B-Instruct.  With Qwen3-8B, the
corresponding counts are 12.0 versus 19.0 calls and 6.0k versus 7.0k tokens.
Verifier calls alone do not measure wall-clock cost: structured generation
and core replay add work beyond a baseline answer check.

\paragraph{Ablations.}
The Qwen2.5-7B-Instruct ablations in Table~\ref{tab:llm-ablation} show that
removing core memory lowers mean task
success from 75.9\% to 73.8\% and raises mean verifier calls from 10.9 to
12.7.  Removing backjumping lowers success to 72.9\% and raises calls to 15.4.
Full-prefix conflicts yield the lowest success of the listed variants at
71.8\%.  Soft heuristic cores reach 74.9\%, but uncertified cores do not
provide the guarantee of hard, reusable nogoods.

\FloatBarrier
\paragraph{Task-level failure modes.}
In \textbf{GSM8K}, a misused quantity propagates through later arithmetic.
In \textbf{MATH-500}, a sign, algebra, or substitution error can implicate
an earlier equation.  In \textbf{24-Game}, an early operation can leave no
solution, and card reuse violates the rules.  In \textbf{Blocksworld}, an
early move can occupy a block or location needed later, with unrelated
actions in between.  In \textbf{HumanEval}, an interface choice can conflict
with a later function assumption.  These cases motivate revising the causal
decision instead of only the latest step.

\section{Conclusion}

\core treats a failed reasoning trajectory as more than a negative score.  A
small, valid explanation of failure is reusable search information: it says
which decision must change, which later decisions are irrelevant, and which
combination should never be tried again.  The resulting algorithm imports
conflict-directed backjumping and learning into a model-agnostic reasoning
interface, with guarantees that are explicit about verifier soundness.
Controlled experiments show reduced verification work under matched
proposals, and the five-task evaluation with both Qwen2.5-7B-Instruct and
Qwen3-8B reports higher task success and lower verifier-call use than Tree of
Thoughts.  Measuring latency and auditing core soundness are the next steps
for assessing these gains.

\bibliography{references}

\end{document}